\documentclass[11pt]{article}
\usepackage[a4paper, total={6.5in, 9in}]{geometry}

\usepackage[titletoc,title]{appendix}

\usepackage{url}            
\usepackage{booktabs}       
\usepackage{amsfonts}       
\usepackage{nicefrac}       
\usepackage{microtype}      

\usepackage{bm}
\usepackage{amssymb}
\usepackage{amsthm}
\usepackage{amsmath}

\usepackage{times}
\usepackage{graphicx}
\usepackage{subfigure}

\usepackage{algorithm}
\usepackage{algorithmic}

\usepackage{multirow}
\usepackage{setspace}

\usepackage[numbers]{natbib}

\newtheorem{lem}{Lemma}
\newtheorem{prop}{Proposition}

\setlength{\parskip}{2pt}

\title{Combinatorial Topic Models using Small-Variance Asymptotics}
\date{}

{\tiny
\author{
  Ke Jiang \\
  Dept. of Computer Science and Engineering\\
  Ohio State University\\
  \texttt{jiang.454@osu.edu}
  \and
  Suvrit Sra \\
  Lab. for Information and Decision Systems \\
  Massachusetts Institute of Technology \\
  \texttt{suvrit@mit.edu}
  \and
  Brian Kulis \\
  Dept. of Electrical \& Computer Engineering and Dept. of Computer Science \\
  Boston University \\
  \texttt{bkulis@bu.edu}
}
}

\begin{document}

\maketitle

\begin{abstract}
Topic models have emerged as fundamental tools in unsupervised machine learning.  Most modern topic modeling algorithms take a probabilistic view and derive inference algorithms based on Latent Dirichlet Allocation (LDA) or its variants.  In contrast, we study topic modeling as a combinatorial optimization problem, and propose a new  objective function derived from LDA by passing to the small-variance limit. We minimize the derived objective by using ideas from combinatorial optimization, which results in a new, fast, and high-quality topic modeling algorithm.  In particular, we show that our results are competitive with popular LDA-based topic modeling approaches, and also discuss the (dis)similarities between our approach and its probabilistic counterparts.
\end{abstract}

\section{Introduction}

Topic modeling has long been fundamental to unsupervised learning on large document collections. Though the roots of topic modeling date back to latent semantic indexing~\cite{deerwester_lsi} and probabilistic latent semantic indexing~\cite{hofmann_plsi}, the arrival of Latent Dirichlet Allocation (LDA)~\cite{blei_lda} was a turning point that transformed the community's thinking about topic modeling. LDA led to several followups that address some limitations of the original model~\cite{correlated_lda,spatial_lda}, and also helped pave the way for subsequent advances in Bayesian learning methods, including variational inference methods~\cite{collapsed_vb}, nonparametric Bayesian models~\cite{nested_crp,hierarchical_dp}, among others.

The LDA family of topic models are almost exclusively cast as probabilistic models. Consequently, the vast majority of techniques developed for topic modeling---collapsed Gibbs sampling~\cite{lda_gibbs}, variational methods~\cite{blei_lda,collapsed_vb}, and ``factorization'' approaches with theoretical guarantees~\cite{anandkumar2012spectral,arora2012learning,bansal2014provable}---are centered around performing inference for underlying probabilistic models. By limiting ourselves to a purely probabilistic viewpoint, we may be missing important opportunities grounded in combinatorial thinking. This realization leads us to the central question of this paper: \emph{Can we obtain a combinatorial topic model that competes with LDA?}


%

We answer this question in the affirmative. In particular, we propose a combinatorial optimization formulation for topic modeling, derived using \emph{small-variance asymptotics} (SVA) on the LDA model. SVA produces limiting versions of various probabilistic learning models, which can then be solved as combinatorial optimization problems. An analogy worth keeping in mind here is how $k$-means solves the combinatorial problem that arises upon letting variances go to zero in Gaussian mixtures. 

SVA techniques have proved quite fruitful recently, e.g., for cluster evolution~\cite{campbell_nips}, hidden Markov models~\cite{sva_hmm}, feature learning~\cite{mad_bayes}, supervised learning~\cite{sva_dpsvm}, hierarchical clustering~\cite{sva_hierarchical_clustering}, and others~\cite{jump_means,dp_space}. A common theme in these examples is that computational advantages and good empirical performance of $k$-means carry over to richer SVA based models. Indeed, in a compelling example, \cite{campbell_nips} demonstrate how a hard cluster evolution algorithm obtained via SVA is orders of magnitude faster than competing sampling-based methods, while still being significantly more accurate than competing probabilistic inference algorithms on benchmark data.  %

But merely using SVA to obtain a combinatorial topic model does not suffice. We need effective algorithms to optimize the resulting model. Unfortunately, a direct application of greedy combinatorial procedures on the LDA-based SVA model \emph{fails} to compete with the usual probabilistic LDA methods. This setback necessitates a new idea. Surprisingly, as we will see, a local refinement procedure combined with an improved word assignment technique transforms the SVA approach into a competitive topic modeling algorithm.

\textbf{Contributions}. In summary the main contributions of our paper are the following:
\begin{itemize}
\item We perform SVA on the standard LDA model and obtain through it a combinatorial topic model. 
\item We develop an optimization procedure for optimizing the derived combinatorial model by utilizing local refinement and ideas from the facility location problem.
\item We show how our procedure can be implemented to take $O(NK)$ time per iteration to assign each word token to a topic, where $N$ is the total number of word tokens and $K$ the number of topics.
\item We demonstrate that our approach competes favorably with existing state-of-the-art topic modeling algorithms; in particular, our approach is orders of magnitude faster than sampling-based approaches, with comparable or better accuracy.
\end{itemize}

Before proceeding to outline the technical details, we make an important comment regarding evaluation of topic models. The connection between our approach and standard LDA may be viewed analogously to the connection between k-means and a Gaussian mixture model. As such, evaluation is nontrivial; most topic models are evaluated using predictive log-likelihood or related measures. In light of the ``hard-vs-soft'' analogy, a predictive log-likelihood score can be a misleading way to evaluate performance of the k-means algorithm, so clustering comparisons typically focus on ground-truth accuracy (when possible). Due to the lack of available ground truth data, to assess our combinatorial model we must resort to synthetic data sampled from the LDA model to enable meaningful quantitative comparisons; but in line with common practice we also present results on real-world data, for which we use both hard and soft predictive log-likelihoods.

\subsection{Related Work}

\textbf{LDA Algorithms.} Many techniques have been developed for efficient inference for LDA.  The most popular are perhaps MCMC-based methods, notably the collapsed Gibbs sampler (CGS)~\cite{lda_gibbs}, and variational inference methods~\cite{blei_lda,collapsed_vb}.  Among MCMC and variational techniques, CGS typically yields excellent results and is guaranteed to sample from the desired posterior with sufficiently many samples. Its running time can be slow and many samples may be required before convergence.

Since topic models are often used on large (document) collections, significant effort has been made in scaling up LDA algorithms. One recent example is \citep{li2014reducing} that presents a massively distributed implementation. Such methods are outside the focus of this paper, which focuses more on our new combinatorial model that can quatitatively compete with the probabilistic LDA model. Ultimately, our model should be amenable to fast distributed solvers, and obtaining such solvers for our model is an important part of future work.

A complementary line of algorithms starts with~\citep{arora2012learning,arora2012practical}, who consider certain separability assumptions on the input data to circumvent NP-Hardness of the basic LDA model. These works have shown performance competitive to Gibbs sampling in some scenarios while also featuring theoretical guarantees. Other recent viewpoints on LDA are offered by~\citep{anandkumar2012spectral,bach2015lda,bansal2014provable}.

\textbf{Small-Variance Asymptotics (SVA).} As noted above, SVA has recently emerged as a powerful tool for obtaining scalable algorithms and objective functions by ``hardening'' probabilistic models. 
Similar connections are known for instance in dimensionality reduction~\cite{roweis}, multi-view learning, classification~\cite{tong_koller}, and structured prediction~\cite{chang_icml}.  
Starting with Dirichlet process mixtures~\cite{kulis_jordan}, one thread of research has considered applying SVA to richer Bayesian nonparametric models. Applications include clustering~\cite{kulis_jordan}, feature learning~\cite{mad_bayes}, 
evolutionary clustering~\cite{campbell_nips}, infinite hidden Markov models~\cite{sva_hmm}, Markov jump processes~\cite{jump_means}, infinite SVMs~\cite{sva_dpsvm}, and hierarchical clustering methods~\cite{sva_hierarchical_clustering}.  A related thread of research considers how to apply SVA methods when the data likelihood is not Gaussian, which is precisely the scenario under which LDA falls.  In~\citep{jiang_nips}, it is shown how SVA may be applied as long as the likelihood is a member of the exponential family of distributions.  Their work considers topic modeling as a potential application, but does not develop any algorithmic tools, and without these SVA \emph{fails} to succeed on topic models; the present paper fixes this by using a stronger word assignment algorithm and introducing local refinement. 

\textbf{Combinatorial Optimization.} In developing effective algorithms for topic modeling, we will borrow some ideas from the large literature on combinatorial optimization algorithms.  In particular, in the $k$-means community, significant effort has been made on how to improve upon the basic $k$-means algorithm, which is known to be prone to local optima; these techniques include local search methods~\cite{local_search} and good initialization strategies~\cite{kmeans_plus_plus}.  We also borrow ideas from approximation algorithms, most notably algorithms based on the facility location problem~\cite{greedy_fl}.


\section{SVA for Latent Dirichlet Allocation}

\label{sec:algo}
We now detail our combinatorial approach to topic modeling. We start with the derivation of the underlying objective function that is the basis of our work. This objective is derived from the LDA model by applying SVA, and contains two terms. The first is similar to the $k$-means clustering objective in that it seeks to assign words to topics that are, in a particular sense, ``close.''  The second term, arising from the Dirichlet prior on the per-document topic distributions, places a penalty on the number of topics per document.

Recall the standard LDA model. We choose topic weights for each document as $\theta_j \sim \mbox{Dir}(\alpha)$, where $j \in \{1, ..., M\}$.  Then we choose word weights for each topic as $\psi_i \sim \mbox{Dir}(\beta)$, where $i \in \{1, ..., K\}$.  Then, for each word $i$ in document $j$, we choose a topic $z_{jt} \sim \mbox{Cat}(\theta_j)$ and a word $w_{jt} \sim \mbox{Cat}(\psi_{z_{jt}})$.
Here $\alpha$ and $\beta$ are scalars (i.e., we are using a symmetric Dirichlet distribution).  Let $\mathbf{W}$ denote the vector of all words in all documents, $\mathbf{Z}$ the topic indicators of all words in all documents, $\bm{\theta}$ the concatenation of all the $\theta_j$ variables, and $\bm{\psi}$ the concatenation of all the $\psi_i$ variables.  Also let $N_j$ be the total number of word tokens in document $j$.  The $\theta_j$ vectors are each of length $K$, the number of topics.  The $\psi_i$ vectors are each of length $D$, the size of the vocabulary. We can write down the full joint likelihood $p(\mathbf{W},\mathbf{Z},\bm{\theta},\bm{\psi} | \alpha, \beta)$ of the model in the factored form
\begin{displaymath}
\prod_{i=1}^K p(\psi_i | \beta) \prod_{j=1}^M p(\theta_j | \alpha) \prod_{t=1}^{N_j} p(z_{jt} | \theta_j)p (w_{jt} | \psi_{z_{jt}}),
\end{displaymath}
where each of the probabilities is as specified by the LDA model. Following standard LDA manipulations, we can eliminate variables to simplify inference by integrating out $\bm{\theta}$ to obtain
\begin{equation}
\label{eq:1}
p(\mathbf{Z},\mathbf{W}, \bm{\psi} | \alpha, \beta) = \int_{\bm{\theta}} p(\mathbf{W},\mathbf{Z},\bm{\theta},\bm{\psi} | \alpha, \beta) d \bm{\theta}.
\end{equation}
After integration and some simplification, \eqref{eq:1} becomes 
\begin{equation}
\bigg [\prod_{i=1}^K p(\psi_i | \beta) \prod_{j=1}^M \prod_{t=1}^{N_j} p(w_{jt} | \psi_{z_{jt}}) \bigg ] \times \bigg [ \prod_{j=1}^M \frac{\Gamma(\alpha K)}{\Gamma(\sum_{i=1}^K n_{j \cdot}^i + \alpha K)} \prod_{i=1}^K \frac{\Gamma(n_{j \cdot}^i + \alpha)}{\Gamma(\alpha)} \bigg ].
\label{eqn:loglikelihood}
\end{equation}
Here $n_{j \cdot}^i$ is the number of word tokens in document $j$ assigned to topic $i$. Now, following~\cite{mad_bayes}, we can obtain the SVA objective by taking the (negative) logarithm of this likelihood and letting the variance go to zero.  Given space considerations, we will summarize this derivation; full details are available in Appendix A.

Consider the first bracketed term of~\eqref{eqn:loglikelihood}. Taking logs yields a sum over terms of the form $\log p(\psi_i|\beta)$ and terms of the form $\log p(w_{jt} | \psi_{z_{jt}})$.  Noting that the latter of these is a multinomial distribution, and thus a member of the exponential family, we can appeal to the results in~\citep{banerjee_bregman,jiang_nips} to introduce a new parameter for scaling the variance. In particular, we can write $p(w_{jt} | \psi_{z_{jt}})$ in its \emph{Bregman divergence} form $\exp(-\mbox{KL}(\tilde{w}_{jt},\psi_{z_{jt}}))$, 
where KL refers to the discrete KL-divergence, and $\tilde{w}_{jt}$ is an indicator vector for the word at token $w_{jt}$.  It is straightforward to verify that $\mbox{KL}(\tilde{w}_{jt},\psi_{z_{jt}}) = -\log \psi_{z_{jt},w_{jt}}$. Next, introduce a new parameter $\eta$ that scales the variance appropriately, and write the resulting distribution as proportional to $\exp(-\eta \cdot \mbox{KL}(\tilde{w}_{jt},\psi_{z_{jt}}))$.  As $\eta \to \infty$, the expected value of the distribution remains fixed while the variance goes to zero, exactly what we require.

After this, consider the second bracketed term of~\eqref{eqn:loglikelihood}.  We scale $\alpha$ appropriately as well; this ensures that the hierarchical form of the model is retained asymptotically.  In particular, we write $\alpha = \exp(-\lambda \cdot \eta)$.  After some manipulation of this distribution, we can conclude that the negative log of the Dirichlet multinomial term becomes asymptotically $ \eta \lambda (K_{j+} - 1)$, where $K_{j+}$ is the number of topics $i$ in document $j$ where $n_{j \cdot}^i > 0$, i.e., the number of topics currently used by document $j$.  (The maximum value for $K_{j+}$ is $K$, the total number of topics.)  To formalize, let $f(x) \sim g(x)$ denote that $f(x) / g(x) \to 1$ as $x \to \infty$. Then we have the following (see Appendix A for a proof):
\begin{lem}
Consider the likelihood
\begin{displaymath}
p(\bm{Z} | \alpha) = \bigg [ \prod_{j=1}^M \frac{\Gamma(\alpha K)}{\Gamma(\sum_{i=1}^K n_{j \cdot}^i + \alpha K)} \prod_{i=1}^K \frac{\Gamma(n_{j \cdot}^i + \alpha)}{\Gamma(\alpha)} \bigg ].
\end{displaymath}
If $\alpha = \exp(-\lambda \cdot \eta)$, then asymptotically as $\eta \to \infty$, the negative log-likelihood satisfies
\begin{displaymath}
-\log p(\bm{Z} | \alpha) \sim \eta \lambda \sum\nolimits_{j=1}^M(K_{j+} - 1).
\end{displaymath}
\end{lem}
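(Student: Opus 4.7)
My plan is to take the negative logarithm of $p(\bm Z\mid\alpha)$, split the resulting expression document by document, and then match each of the four resulting $\log\Gamma$ contributions against the asymptotic behavior of $\Gamma$ near $0$ and at positive integers. Specifically, writing $N_j=\sum_i n_{j\cdot}^i$, I get
\begin{displaymath}
-\log p(\bm Z\mid\alpha)=\sum_{j=1}^M\Big[\log\Gamma(N_j+\alpha K)-\log\Gamma(\alpha K)+K\log\Gamma(\alpha)-\sum_{i=1}^K\log\Gamma(n_{j\cdot}^i+\alpha)\Big].
\end{displaymath}
The idea is that under $\alpha=\exp(-\lambda\eta)$ with $\eta\to\infty$ we have $\alpha\to0^+$, so the only terms that grow with $\eta$ are those where the $\Gamma$ argument is tending to $0$; everything else is $O(1)$.

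The two facts I would invoke are (i) $\Gamma(\alpha)\sim 1/\alpha$ as $\alpha\to 0^+$, hence $\log\Gamma(\alpha)\sim-\log\alpha=\lambda\eta$, and symmetrically $\log\Gamma(\alpha K)\sim\lambda\eta$ (the $-\log K$ is absorbed into the lower-order remainder); and (ii) for any fixed positive integer $m$, $\Gamma(m+\alpha)\to\Gamma(m)$, so $\log\Gamma(m+\alpha)=O(1)$. The same holds for $\log\Gamma(N_j+\alpha K)=O(1)$ since $N_j\ge 1$ is a fixed positive integer.

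Applying these term-by-term within document $j$: $\log\Gamma(N_j+\alpha K)$ is $O(1)$; $-\log\Gamma(\alpha K)$ contributes $-\lambda\eta+O(1)$; among the $K$ terms $-\log\Gamma(n_{j\cdot}^i+\alpha)$, the $K_{j+}$ terms with $n_{j\cdot}^i>0$ are each $O(1)$ while the remaining $K-K_{j+}$ terms (for empty topics) each equal $-\log\Gamma(\alpha)\sim-\lambda\eta$; and the $K\log\Gamma(\alpha)$ term contributes $K\lambda\eta+O(1)$. Summing these pieces gives
\begin{displaymath}
-\lambda\eta-(K-K_{j+})\lambda\eta+K\lambda\eta+O(1)=\lambda\eta(K_{j+}-1)+O(1).
\end{displaymath}
Summing over $j=1,\dots,M$ and dividing by $\eta\lambda\sum_j(K_{j+}-1)$, the lower-order remainder vanishes in the limit, which yields the stated equivalence $-\log p(\bm Z\mid\alpha)\sim\eta\lambda\sum_j(K_{j+}-1)$.

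The only delicate point is the degenerate case where $\sum_j(K_{j+}-1)=0$, i.e., every document uses exactly one topic; there the claimed equivalent is zero and the $f\sim g$ statement needs to be interpreted carefully (the negative log-likelihood is then itself $O(1)$, not growing with $\eta$). Apart from handling that edge case, the main thing to be careful about is bookkeeping: making sure that the $O(1)$ remainders from $\log\Gamma(N_j+\alpha K)$, from the $-\log K$ in $\log\Gamma(\alpha K)$, and from the $K_{j+}$ non-empty topic terms really are bounded uniformly in $\eta$ (they depend only on the fixed counts $n_{j\cdot}^i$ and $N_j$), so that they are genuinely lower order than $\eta$ and do not disturb the asymptotic ratio.
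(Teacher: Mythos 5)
Your proof is correct and follows essentially the same route as the paper's: both arguments reduce to identifying the terms that diverge as $\alpha=\exp(-\lambda\eta)\to 0^{+}$ and counting a net contribution of $(K_{j+}-1)\lambda\eta$ per document, the only difference being that the paper first telescopes the Gamma ratios via $\Gamma(x+n)/\Gamma(x)=\prod_{k=0}^{n-1}(x+k)$ into sums of $\log(\alpha K+n)$ and $\log(\alpha+n)$ (which handles empty topics automatically, since the product is then empty), whereas you keep the four $\log\Gamma$ blocks separate and invoke the pole $\Gamma(\alpha)\sim 1/\alpha$, treating the empty-topic terms as a separate case. Your caveat about the degenerate situation $\sum_{j}(K_{j+}-1)=0$, where the stated $\sim$ relation is ill-posed, is legitimate and is not addressed in the paper.
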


Now we put the terms of the negative log-likelihood together.  The $-\log p(\psi_i | \beta)$ terms vanish asymptotically since we are not scaling $\beta$ (see the note below on scaling $\beta$).  Thus, the remaining terms in the SVA objective are the ones arising from the word likelihoods and the Dirichlet-multinomial.
Using the Bregman divergence representation with the additional $\eta$ parameter, we conclude that the negative log-likelihood asymptotically yields the following:
\begin{displaymath}
-\log p(\mathbf{Z},\mathbf{W}, \bm{\psi} | \alpha, \beta)
\sim
\eta \bigg [ \sum_{j=1}^M \sum_{t=1}^{N_j} \mbox{KL}(\tilde{w}_{jt},\psi_{z_{jt}}) + \lambda \sum_{j=1}^M (K_{j+}-1) \bigg ],
\end{displaymath}
which leads to our final objective function
\begin{equation}
\min_{\bm{Z},\bm{\psi}} \bigg ( \sum_{j=1}^M \sum_{t=1}^{N_j} \mbox{KL}(\tilde{w}_{jt},\psi_{z_{jt}}) + \lambda \sum_{j=1}^M K_{j+} \bigg ).
\label{eqn:obj}
\end{equation}
We remind the reader that $\mbox{KL}(\tilde{w}_{jt},\psi_{z_{jt}}) = -\log \psi_{z_{jt},w_{jt}}$.  Thus, we obtain a $k$-means-like term that says that all words in all documents should be ``close'' to their assigned topic in terms of KL-divergence, but that we should also not have too many topics represented in each document.

\textbf{Note}. We did not scale $\beta$ to obtain a simple objective with only one parameter (other than the total number of topics), but let us say a few words about scaling $\beta$.  A natural approach is to further integrate out $\bm{\psi}$ of the joint likelihood, as is done in the collapsed Gibbs sampler.  One would obtain additional Dirichlet-multinomial distributions, and properly scaling as discussed above would yield a simple objective that places penalties on the number of topics per document as well as the number of words in each topic.  Optimization would then be performed with respect to the topic assignment matrix.  Future work will consider effectiveness of such an objective function for topic modeling.  

\section{Algorithms}

\label{sec:algorithms}
With our combinatorial objective in hand, we are ready to develop algorithms that optimize it.  In particular, we discuss a locally-convergent algorithm similar to $k$-means and the hard topic modeling algorithm \cite{jiang_nips}. Then, we introduce two more powerful techniques: (i) a word-level assignment method that arises from connections between our proposed objective function and the facility location problem; and (ii) an incremental topic refinement method that is inspired by local-search methods developed for $k$-means.  Despite the apparent complexity of our algorithms, we show that the per-iteration time matches that of the collapsed Gibbs sampler (while empirically converging in just a few iterations, as opposed to the thousands typically required for Gibbs sampling).


We first describe a basic iterative algorithm for optimizing the combinatorial hard LDA objective derived in the previous section (see Appendix A for pseudo-code).  The basic algorithm follows the $k$-means style---we perform alternate optimization by first minimizing with respect to the topic indicators for each word (the $\mathbf{Z}$ values) and then minimizing with respect to the topics (the $\bm{\psi}$ vectors).  

Consider first minimization with respect to $\bm{\psi}$, with $\mathbf{Z}$ fixed.  In this case, the penalty term of the objective function for the number of topics per document is not relevant to the minimization.  Therefore the minimization can be performed in closed form by computing means based on the assignments, due to known properties of the KL-divergence; 
see Proposition 1 of~\cite{banerjee_bregman}.  In our case, the topic vectors will be computed as follows: entry $\psi_{iu}$ corresponding to topic $i$ and word $u$ will simply be equal to the number of occurrences of word $u$ assigned to topic $i$ normalized by the total number of word tokens assigned to topic $i$.

Next consider minimization with respect to $\mathbf{Z}$ with fixed $\bm{\psi}$. We follow a strategy similar to DP-means~\cite{kulis_jordan}.  In particular,
  we compute the KL-divergence between each word token $w_{jt}$ and every topic $i$ via $-\log (\psi_{i,w_{jt}})$.  Then, for any topic $i$ that is not currently occupied by any word token in document $j$, i.e., $z_{jt} \neq i$ for all tokens $t$ in document $j$, we \textit{penalize} the distance by $\lambda$.  Next we obtain new assignments by reassigning each word token to the topic corresponding to its smallest divergence (including any penalties).  We continue this alternating strategy until convergence.
The running time of the batch algorithm can be shown to be $O(NK)$ per iteration, where $N$ is the total number of word tokens and $K$ is the number of topics.  
One can also show that this algorithm is guaranteed to converge to a local optimum, similar to $k$-means and DP-means.  

\subsection{Improved Word Assignments}

The basic algorithm has the advantage that it achieves local convergence. However, it is quite sensitive to initialization, analogous to standard k-means. 
In this section, we discuss and analyze an alternative assignment technique for $\mathbf{Z}$, which may be used as an initialization to the locally-convergent basic algorithm or to replace it completely.

\begin{algorithm}[tb]
   \caption{Improved Word Assignments for $\mathbf{Z}$}
   \label{alg:ufl}
\begin{algorithmic}
   \STATE {\bfseries Input:} Words: $\mathbf{W}$, Number of topics: $K$, Topic penalty: $\lambda$, Topics: $\bm{\psi}$
   \FOR{every document $j$}
   \STATE Let $f_i = \lambda$ for all topics $i$.
   \STATE Initialize all word tokens to be unmarked.
   \WHILE{there are unmarked tokens}
   \STATE Pick the topic $i$ and set of unmarked tokens $T$ that minimizes~\eqref{eqn:fl}.
   \STATE Let $f_i = 0$ and mark all tokens in $T$.
   \STATE Assign $z_{jt} = i$ for all $t \in T$.
   \ENDWHILE
   \ENDFOR
   \STATE {\bfseries Output:} Assignments $\mathbf{Z}$.
\end{algorithmic}
\end{algorithm}

Algorithm~\ref{alg:ufl} details the alternate assignment strategy for tokens.  The inspiration for this greedy algorithm arises from the fact that we can view the assignment problem for $\mathbf{Z}$, given $\bm{\psi}$, as an instance of the uncapacitated facility location (UFL) problem \cite{greedy_fl}.  Recall that the UFL problem aims to open a set of facilities from a set $F$ of potential locations.  Given a set of clients $D$, a distance function $d: D \times F \rightarrow \mathbb{R}_+$, and a cost function $f: F \rightarrow \mathbb{R}_+$ for the set $F$, the UFL problem aims to find a subset $S$ of $F$ that minimizes $\sum_{i \in S} f_i + \sum_{j \in D} (\min_{i \in S} d_{ij})$.

To map UFL to the assignment problem in combinatorial topic modeling, consider the problem of assigning word tokens to topics for some fixed document $j$.  The topics correspond to the facilities and the clients correspond to word tokens.  Let $f_i = \lambda$ for each facility, and let the distances between clients and facilities be given by the corresponding KL-divergences as detailed earlier.  Then the UFL objective corresponds exactly to the assignment problem for topic modeling.  Algorithm~\ref{alg:ufl} is a greedy algorithm for UFL that has been shown to achieve constant factor approximation guarantees when distances between clients and facilities forms a metric~\cite{greedy_fl} (this guarantee does not apply in our case, as KL-divergence is not a metric).

The algorithm, must select, among all topics and all unmarked tokens $T$, the minimizer to
\begin{equation}
\frac{f_i + \sum_{t \in T} \mbox{KL}(\tilde{w}_{jt},\psi_i)}{|T|}.
\label{eqn:fl}
\end{equation}
This algorithm appears to be computationally expensive, requiring multiple rounds of marking where each round requires us to find a minimizer over exponentially-sized sets.  Surprisingly, under mild assumptions we can use the structure of our problem to derive an efficient implementation of this algorithm that runs in total time $O(NK)$.  The details of this efficient implementation are presented in Appendix B.

\subsection{Incremental Topic Refinement}

Unlike traditional clustering problems, topic modeling is hierarchical: we have both word level assignments and ``mini-topics'' (formed by word tokens in the same document which are assigned to the same topic). Explicitly refining the mini-topics should help in achieving better word-coassignment within the same document. Inspired by local search techniques in the clustering literature \cite{local_search}, we take a similar approach here. However, traditional approaches \cite{local_search2} do not directly apply in our setting; we therefore adapt local search techniques from clustering to the topic modeling problem.

\begin{algorithm}[tb]
   \caption{Incremental Topic Refinements for $\mathbf{Z}$}
   \label{alg:localsearch}
\begin{algorithmic}
   \STATE {\bfseries Input:} Words: $\mathbf{W}$, Number of topics: $K$, Topic penalty: $\lambda$, Assignment: $\mathbf{Z}$, Topics: $\bm{\psi}$
   \STATE randomly permute the documents.
   \FOR{every document $j$}
   \FOR{each mini-topic $S$, where $z_{js} = i~\forall s \in S$ for some topic $i$}
   \FOR{every other topic $i' \neq i$}
   \STATE Compute $\Delta(S,i,i')$, the change in the obj. function when re-assigning $z_{js} = i'~\forall s \in S.$
   \ENDFOR
   \STATE Let $i^* = \mbox{argmin}_{i'} \Delta(S,i,i')$.
   \STATE Reassign tokens in $S$ to $i^*$ if it yields a smaller obj.
   \STATE Update topics $\bm{\psi}$ and assignments $\mathbf{Z}$.
   \ENDFOR 
   \ENDFOR
   \STATE {\bfseries Output:} Assignments $\mathbf{Z}$ and Topics $\bm{\psi}$.
\end{algorithmic}
\end{algorithm}

More specifically, we consider an incremental topic refinement scheme that works as follows.  For a given document, we consider swapping all word tokens assigned to the same topic within that document to another topic.  We compute the change in objective function that would occur if we both updated the topic assignments for those tokens and then updated the resulting topic vectors.  Specifically, for document $j$ and its mini-topic $S$ formed by its word tokens assigned to topic $i$, the objective function change can be computed by
\begin{displaymath}
	\Delta(S,i,i') = -(n_{\cdot\cdot}^{i}-n_{j\cdot}^{i})\phi(\bm{\psi}_i^{-}) - (n_{\cdot\cdot}^{i'}+n_{j\cdot}^{i})\phi(\bm{\psi}_{i'}^{+}) 
	+ n_{\cdot\cdot}^{i}\phi(\bm{\psi}_{i}) + n_{\cdot\cdot}^{i'}\phi(\bm{\psi}_{i'}) - \lambda\mathbb{I}[i'\in\mathcal{T}_j],
\end{displaymath}
where $n_{j\cdot}^{i}$ is the number of tokens in document $j$ assigned to topic $i$, $n_{\cdot\cdot}^{i}$ is the total number of tokens assigned to topic $i$, $\bm{\psi}_i^{-}$ and $\bm{\psi}_{i'}^{+}$ are the updated topics, $\mathcal{T}_j$ is the set of all the topics used in document $j$, and $\phi(\bm{\psi}_i) = \sum_{w}\psi_{iw}\log\psi_{iw}$.

We accept the move if $\min_{i'\neq i}\Delta(S,i,i')<0$ and update the topics $\bm{\psi}$ and assignments $\mathbf{Z}$ accordingly. Then we continue to the next mini-topic, hence the term ``incremental". Note here we accept \textit{all} moves that improve the objective function instead of just the single best move as in traditional approaches~\cite{local_search2}. Since $\bm{\psi}$ and $\mathbf{Z}$ are updated in every objective-decreasing move, we randomly permute the processing order of the documents in each iteration. This usually helps in obtaining better results in practice. See Algorithm~\ref{alg:localsearch} for details.

At first glance, it appears that this incremental topic refinement strategy may be computationally expensive.  However, computing the global change in objective function $\Delta(S,i,i')$ can be performed in $O(|S|)$ time, if the topics are maintained by count matrices. Only the counts involving the words in the mini-topic and the total counts are affected. Since we compute the change across all topics, and across all mini-topics $S$, the total running time of the incremental topic refinement can be seen to be $O(NK)$, as in the basic batch algorithm and the facility location assignment algorithm.


\section{Experiments}

In this section, we compare the algorithms proposed above with their probabilistic counterparts.

\subsection{Synthetic Documents}

Our first set of experiments is on simulated data. We compare three versions of our algorithms---Basic Batch (\texttt{Basic}), Improved Word Assignment (\texttt{Word}), and Improved Word with Topic Refinement (\texttt{Word+Refine})---with the collapsed Gibbs sampler (\texttt{CGS})\footnote{http://psiexp.ss.uci.edu/research/programs\_data/toolbox.htm} \cite{lda_gibbs}, the standard variational inference algorithm (\texttt{VB})\footnote{http://scikit-learn.org/stable/modules/generated/sklearn.decomposition.LatentDirichletAllocation.html}~\cite{blei_lda}, and the recent \texttt{Anchor} method\footnote{http://www.cs.nyu.edu/$\sim$halpern/code.html} \cite{arora2012practical}.

\textbf{Methodology}. Due to a lack of ground truth data for topic modeling, following~\cite{arora2012practical}, we benchmark on synthetic data.  We train all algorithms on the following data sets. (A) documents sampled from an LDA model with $\alpha = 0.04, \beta = 0.05$, with 20 topics and having vocabulary size 2000. Each document has length 150. (B) documents sampled from an LDA model with $\alpha = 0.02, \beta = 0.01$, 50 topics and vocabulary size 3000. Each document has length 200.

\begin{figure}
\centering
\begin{minipage}{.6\textwidth}
\begin{tabular}{|l||c|c|c|c|c|c|}
\hline
\multirow{2}{*}{Method} & \multicolumn{6}{c|}{Number of Documents} \\ \cline{2-7}
 & 5k & 10k & 50k & 100k & 500k & 1M \\
\hline
\texttt{CGS} (s) & .143 & .321 & 1.96 & 4.31 & 23.36 & 55.69\\
\hline
\texttt{Word} (s) & .438 & .922 & 4.88 & 9.75 & 50.38 & 101.58\\
\hline
\texttt{Word}/\texttt{CGS} & 3.07 & 2.87 & 2.48 & 2.26 & 2.16 & 1.82\\
\hline
\texttt{Refine} (s) & .277 & .533 & 2.58 & 5.09 & 25.75 & 52.28\\
\hline
\texttt{Refine}/\texttt{CGS} & 1.94 & 1.66 & 1.32 & 1.18 & 1.10 & 0.94\\
\hline
\end{tabular}
\end{minipage}%
\begin{minipage}{.5\textwidth}
\begin{tabular}{| l || c c c |}
\hline
Algorithm & 6K & 8K & 10K \\ \hline
\texttt{CGS} &  \textbf{0.098} & 0.338 & 0.276 \\
\texttt{VB} &  0.448 & 0.443 & 0.392 \\
\texttt{Anchor} & 0.118 & 0.118 & 0.112 \\
\texttt{Basic} & 1.805 & 1.796 & 1.794 \\
\texttt{Word} &  0.582 & 0.537 & 0.504 \\
\texttt{W+R} & 0.155 & \textbf{0.110} & \textbf{0.105} \\
\texttt{KMeans} & 1.022 & 0.921 & 0.952 \\ \hline
\end{tabular}
\end{minipage}
\caption{\textbf{Left:} Running time comparison per iteration (in secs) of \texttt{CGS} to the facility location improved word algorithm (\texttt{Word}) and local refinement (\texttt{Refine}), on data sets of different sizes.  \texttt{Word}/\texttt{CGS}  and \texttt{Refine}/\texttt{CGS}  refer to the ratio of \texttt{Word} and \texttt{Refine} to \texttt{CGS}.  For larger datasets, \texttt{Word} takes roughly 2 Gibbs iterations and \texttt{Refine} takes roughly 1 Gibbs iteration. \textbf{Right:} Comparison of topic reconstruction errors of different algorithms with different sizes of SynthB.}
\label{fig:minipage}
\end{figure}

For the collapsed Gibbs sampler, we collect 10 samples with 30 iterations of thinning after 3000 burn-in iterations. The variational inference runs for 100 iterations.  The \texttt{Word} algorithm replaces basic word assignment with the improved word assignment step within the batch algorithm, and \texttt{Word+Refine} further alternates between improved word and incremental topic refinement steps. The \texttt{Word} and \texttt{Word+Refine} are run for 20 and 10 iterations respectively. For \texttt{Basic}, \texttt{Word} and \texttt{Word+Refine}, we run experiments with $\lambda\in\{6,7,8,9,10,11,12\}$, and the best results are presented if not stated otherwise. In contrast, the \textit{true} $\alpha, \beta$ parameters are provided as input to the LDA algorithms, whenever applicable.  We note that we have heavily handicapped our methods by this setup, since the LDA algorithms are designed specifically for data from the LDA model.



\textbf{Assignment accuracy}. Both the Gibbs sampler and our algorithms provide word-level topic assignments. Thus we can compare the training accuracy of these assignments, which is shown in Table \ref{tab:nmi}.  The result of the Gibbs sampler is given by the highest among all the samples selected. The accuracy is shown in terms of the \textit{normalized mutual information} (NMI) score and the \textit{adjusted Rand index} (ARand), which are both in the range of [0,1] and are standard evaluation metrics for clustering problems.
From the plots, we can see that the performance of \texttt{Word+Refine} matches or slightly outperforms the Gibbs sampler for a wide range of $\lambda$ values.

\textbf{Topic reconstruction error}. Now we look at the reconstruction error between the true topic-word distributions and the learned distributions. In particular, given a learned topic matrix $\hat{\bm{\psi}}$ and the true matrix $\bm{\psi}$, we use the Hungarian algorithm \cite{Hungarian} to align topics, and then evaluate the $\ell_1$ distance between each pair of topics. Figure \ref{fig:minipage} presents the mean reconstruction errors per topic of different learning algorithms for varying number of documents. As a baseline, we also include the results from the $k$-means algorithm with KL-divergence \cite{banerjee_bregman} where each document is assigned to a single topic.  We see that, on this data, the \texttt{Anchor} and \texttt{Word+Refine} methods perform the best; see Appendix C for further results and discussion.

\textbf{Running Time.}  See Figure~\ref{fig:minipage} for comparisons of our approach to \texttt{CGS}. The two most expensive steps of the \texttt{Word+Refine} algorithm are the word assignments via facility location and the local refinement step (the other steps of the algorithm are lower-order).  The relative runnings times improve as the data set sizes gets larger and, on large data sets, an iteration of \texttt{Refine} is roughly equivalent to one Gibbs iteration while an iteration of \texttt{Word} is roughly equivalent to two Gibbs iterations.  Since one typically runs thousands of Gibbs iterations (while ours runs in 10 iterations even on very large data sets), we can observe several orders of magnitude improvement in speed by our algorithm.  Further, running times could be significantly enhanced by noting that the facility location algorithm trivially parallellizes.  In addition to these results, we found our per-iteration running times to be consistently faster than \texttt{VB}.

See Appendix C for further results on synthetic data, including on using our algorithm as initialization to the collapsed Gibbs sampler.

\begin{table}
\centering
\begin{tabular}{| l || c c c c c |}
\hline
NMI / ARand & $\lambda = 8$ & $\lambda = 9$ & $\lambda = 10$ & $\lambda = 11$ & $\lambda = 12$ \\ \hline
\texttt{Basic} & 0.027 / 0.009 & 0.027 / 0.009 & 0.027 / 0.009 & 0.027 / 0.009 & 0.027 / 0.009 \\ \hline
\texttt{Word} & 0.724 / 0.669 & 0.730 / 0.660 & 0.786 / 0.750 & 0.786 / 0.745 & 0.784 / 0.737 \\ \hline
\texttt{Word+Refine} & 0.828 / 0.838 & 0.839 / 0.850 & 0.825 / 0.810 & 0.847/ \textbf{0.859} & \textbf{0.848} / \textbf{0.859} \\ \hline
\texttt{CGS} & \multicolumn{5}{c|}{0.829 / 0.839} \\ \hline \hline

NMI / ARand & $\lambda = 6$ & $\lambda = 7$ & $\lambda = 8$ & $\lambda = 9$ & $\lambda = 10$ \\ \hline
\texttt{Basic} & 0.043 / 0.007 & 0.043 / 0.007 & 0.043 / 0.007 & 0.043 / 0.007 & 0.043 / 0.007 \\ \hline
\texttt{Word} & 0.850 / 0.737 & 0.854 / 0.743 & 0.855 / 0.752 & 0.855 / 0.750 & 0.850 / 0.743 \\ \hline
\texttt{Word+Refine} & 0.922 / 0.886 & \textbf{0.926} / \textbf{0.901} & 0.913 / 0.860 & 0.923 / 0.899 & 0.914 / 0.876 \\ \hline
\texttt{CGS} & \multicolumn{5}{c|}{0.917 / 0.873} \\ \hline
\end{tabular}
\caption{The NMI scores and Adjusted Rand Index (best results in bold) for word assignments of our algorithms for both synthetic datasets with 5000 documents (\textbf{top}: SynthA, \textbf{bottom}: SynthB).}
\label{tab:nmi}
\end{table}

\begin{table}
\centering
\begin{tabular}{| l || c | c | c || c | c | c || c | c | c |}
\hline
\multirow{2}{*}{Enron} & \multicolumn{3}{c||}{$\beta=0.1$}  & \multicolumn{3}{c||}{$\beta=0.01$} & \multicolumn{3}{c|}{$\beta=0.001$} \\ \cline{2-10}
 & hard & original & KL & hard & original & KL & hard & original & KL \\ \hline
 \texttt{CGS} & -5.932 & -8.583 & 3.899 & -5.484 & -10.781 & 7.084 & -5.091 & -13.296 & 10.000 \\ \hline
 \texttt{W+R} & -5.434 & -9.843 & 4.541 & -5.147 & -11.673 & 7.225 & -4.918 & -13.737 & 9.769 \\ \hline
 \multirow{2}{*}{NYT} & \multicolumn{3}{c||}{$\beta=0.1$}  & \multicolumn{3}{c||}{$\beta=0.01$} & \multicolumn{3}{c|}{$\beta=0.001$} \\ \cline{2-10}
 & hard & original & KL & hard & original & KL & hard & original & KL \\ \hline
 \texttt{CGS} & -6.594 & -9.361 & 4.374 & -6.205 & -11.381 & 7.379 & -5.891 & -13.716 & 10.135 \\ \hline
 \texttt{W+R} & -6.105 & -10.612 & 5.059 & -5.941 & -12.225 & 7.315 & -5.633 & -14.524 & 9.939 \\ \hline
\end{tabular}
\caption{The predictive word log-likelihood on new documents for Enron ($K=100$ topics) and NYTimes ($K=100$ topics) datasets with fixed $\alpha$ value. ``hard'' is short for hard predictive word log-likelihood which is computed using the word-topic assignments inferred by the \texttt{Word} algorithm, ``original'' is short for original predictive word log-likelihood which is computed using the document-topic distributions inferred by the sampler, and ``KL'' is short for symmetric KL-divergence.}
\label{tab:enron}
\end{table}

\begin{table}[!h]\small
\centering
\begin{tabular}{| l || l |}
\hline
\texttt{CGS} & art, artist, painting, museum, century, show, collection, history, french, exhibition\\
\texttt{W+R} & painting, exhibition, portrait, drawing, object, photograph, gallery, flag, artist \\ \hline
\texttt{CGS} & plane, flight, airport, passenger, pilot, aircraft, crew, planes, air, jet\\
\texttt{W+R} & flight, plane, passenger, airport, pilot, airline, aircraft, jet, planes, airlines\\ \hline
\texttt{CGS} & money, million, fund, donation, pay, dollar, contribution, donor, raising, financial\\
\texttt{W+R} & fund, raising, contribution, donation, raised, donor, soft, raise, finance, foundation\\ \hline
\texttt{CGS} & car, driver, truck, vehicles, vehicle, zzz\_ford, seat, wheel, driving, drive\\
\texttt{W+R} & car, driver, vehicles, vehicle, truck, wheel, fuel, engine, drive, zzz\_ford\\ \hline
\end{tabular}
\caption{Example topics pairs learned from NYTimes dataset.}
\label{tab:enron-topics}
\end{table}

\subsection{Real Documents}

We consider two real-world data sets with different properties: a random subset of the Enron emails (8K documents, vocabulary size 5000), and a subset of the New York Times articles\footnote{http://archive.ics.uci.edu/ml/machine-learning-databases/bag-of-words/} (15K documents, vocabulary size 7000). 1K documents are reserved for predictive performance assessment for both datasets. We use the following metrics: a ``hard'' predictive word log-likelihood and the standard probabilistic predictive word log-likelihood on new documents. To get the topic assignments for new documents, we can either perform one iteration of the \texttt{Word} algorithm which can be used to compute the ``hard'' predictive log-likelihood, or use MCMC to sample the assignments with the learned topic matrix. Our hard log-likelihood can be viewed as the natural analogue of the standard predictive log-likelihood to our setting.  We also compute the symmetric KL-divergence between learned topics. To make fair comparisons, we tune the $\lambda$ value such that the resulting number of topics per document is comparable to that of the sampler.  We remind the reader of issues raised in the introduction, namely that our combinatorial approach is no longer probabilistic, and therefore would not necessarily be expected to perform well on a standard likelihood-based score.


Table \ref{tab:enron} shows the results on the Enron and NYTimes datasets. We can see that our approach excels in the ``hard'' predictive word log-likelihood while lags in the standard mixture-view predictive word log-likelihood, which is in line with the objectives and reminiscent to the differences between $k$-means and GMMs. Table \ref{tab:enron-topics} further shows some sample topics generated by \texttt{CGS} and our method. See Appendix C for further results on predictive log-likelihood, including comparisons to other approaches than \texttt{CGS}.


\section{Conclusions}

Our goal has been to lay the groundwork for a combinatorial optimization view of topic modeling as an alternative to the standard probabilistic framework.  Small-variance asymptotics provides a natural way to obtain an underlying objective function, using the $k$-means connection to Gaussian mixtures as an analogy.  Potential future work includes distributed implementations for further scalability, adapting k-means-based semi-supervised clustering techniques to this setting, and extensions of k-means++~\cite{kmeans_plus_plus} to derive explicit performance bounds for this problem.

\bibliographystyle{plain}
\bibliography{ldabib}

\begin{thebibliography}{10}

\bibitem{anandkumar2012spectral}
A.~Anandkumar, Y.~Liu, D.~J. Hsu, D.~P. Foster, and S.~M. Kakade.
\newblock A spectral algorithm for latent {D}irichlet allocation.
\newblock In {\em NIPS}, pages 917--925, 2012.

\bibitem{arora2012practical}
S.~Arora, R.~Ge, Y.~Halpern, D.~Mimno, A.~Moitra, D.~Sontag, Y.~Wu, and M.~Zhu.
\newblock A practical algorithm for topic modeling with provable guarantees.
\newblock In {\em ICML}, 2013.

\bibitem{arora2012learning}
S.~Arora, R.~Ge, and A.~Moitra.
\newblock Learning topic models--going beyond {SVD}.
\newblock In {\em Foundations of Computer Science (FOCS)}, pages 1--10. IEEE,
  2012.

\bibitem{kmeans_plus_plus}
D.~Arthur and S.~Vassilvitskii.
\newblock {k}-means++: The advantages of careful seeding.
\newblock In {\em ACM-SIAM Symposium on Discrete Algorithms (SODA)}, 2007.

\bibitem{banerjee_bregman}
A.~Banerjee, S.~Merugu, I.~S. Dhillon, and J.~Ghosh.
\newblock Clustering with {B}regman divergences.
\newblock {\em Journal of Machine Learning Research}, 6:1705--1749, 2005.

\bibitem{bansal2014provable}
T.~Bansal, C.~Bhattacharyya, and R.~Kannan.
\newblock A provable {SVD}-based algorithm for learning topics in dominant
  admixture corpus.
\newblock In {\em NIPS}, pages 1997--2005, 2014.

\bibitem{nested_crp}
D.~M. Blei, M.~I. Jordan, T.~L. Griffiths, and J.~B. Tenenbaum.
\newblock Hierarchical topic models and the nested {C}hinese restaurant
  process.
\newblock In {\em NIPS}, 2004.

\bibitem{correlated_lda}
D.~M. Blei and J.~D. Lafferty.
\newblock Correlated topic models.
\newblock In {\em NIPS}, 2006.

\bibitem{blei_lda}
D.~M. Blei, A.~Y. Ng, and M.~I. Jordan.
\newblock Latent {D}irichlet allocation.
\newblock {\em Journal of Machine Learning Research}, 3(4-5):993--1022, 2003.

\bibitem{mad_bayes}
T.~Broderick, B.~Kulis, and M.~I. Jordan.
\newblock {MAD}-{B}ayes: {MAP}-based asymptotic derivations from {B}ayes.
\newblock In {\em ICML}, 2013.

\bibitem{campbell_nips}
T.~Campbell, M.~Liu, B.~Kulis, J.~How, and L.~Carin.
\newblock Dynamic clustering via asymptotics of the dependent {D}irichlet
  process.
\newblock In {\em NIPS}, 2013.

\bibitem{deerwester_lsi}
S.~Deerwester, S.~Dumais, T.~Landauer, G.~Furnas, and R.~Harshman.
\newblock Indexing by latent semantic analysis.
\newblock {\em Journal of the American Society of Information Science},
  41(6):391--407, 1990.

\bibitem{local_search2}
I.~S. Dhillon and Y.~Guan.
\newblock Information theoretic clustering of sparse co-occurrence data.
\newblock In {\em IEEE International Conferece on Data Mining (ICDM)}, 2003.

\bibitem{local_search}
I.~S. Dhillon, Y.~Guan, and J.~Kogan.
\newblock Iterative clustering of high dimensioanl text data augmented by local
  search.
\newblock In {\em IEEE International Conference on Data Mining (ICDM)}, 2002.

\bibitem{lda_gibbs}
T.~L. Griffiths and M.~Steyvers.
\newblock Finding scientific topics.
\newblock {\em Proceedings of the National Academy of Sciences},
  101:5228--5235, 2004.

\bibitem{hofmann_plsi}
T.~Hofmann.
\newblock Probabilistic latent semantic indexing.
\newblock In {\em Proc. SIGIR}, 1999.

\bibitem{jump_means}
J.~H. Huggins, K.~Narasimhan, A.~Saeedi, and V.~K. Mansinghka.
\newblock {JUMP}-means: Small-variance asymptotics for {M}arkov jump processes.
\newblock In {\em ICML}, 2015.

\bibitem{greedy_fl}
K.~Jain, M.~Mahdian, E.~Markakis, A.~Saberi, and V.~V. Vazirani.
\newblock Greedy facility location algorithms analyzed using dual fitting with
  factor-revealing {LP}.
\newblock {\em Journal of the ACM}, 50(6):795--824, 2003.

\bibitem{jiang_nips}
K.~Jiang, B.~Kulis, and M.~I. Jordan.
\newblock Small-variance asymptotics for exponential family {D}irichlet process
  mixture models.
\newblock In {\em NIPS}, 2012.

\bibitem{Hungarian}
H.~W. Kuhn.
\newblock The {H}ungarian method for the assignment problem.
\newblock {\em Naval Research Logistics Quarterly}, 2(1-2):83--97, 1955.

\bibitem{kulis_jordan}
B.~Kulis and M.~I. Jordan.
\newblock Revisiting k-means: New algorithms via {B}ayesian nonparametrics.
\newblock In {\em ICML}, 2012.

\bibitem{sva_hierarchical_clustering}
J.~Lee and S.~Choi.
\newblock {B}ayesian hierarchical clustering with exponential family:
  Small-variance asymptotics and reducibility.
\newblock In {\em Artificial Intelligence and Statistics (AISTATS)}, 2015.

\bibitem{li2014reducing}
A.~Q. Li, A.~Ahmed, S.~Ravi, and A.~J. Smola.
\newblock Reducing the sampling complexity of topic models.
\newblock In {\em ACM SIGKDD}, pages 891--900. ACM, 2014.

\bibitem{bach2015lda}
A.~Podosinnikova, F.~Bach, and S.~Lacoste-Julien.
\newblock Rethinking {LDA}: moment matching for discrete ica.
\newblock In {\em NIPS}, pages 514--522, 2015.

\bibitem{roweis}
S.~Roweis.
\newblock {EM} algorithms for {PCA} and {SPCA}.
\newblock In {\em NIPS}, 1997.

\bibitem{sva_hmm}
A.~Roychowdhury, K.~Jian, and B.~Kulis.
\newblock Small-variance asymptotics for hidden {M}arkov models.
\newblock In {\em NIPS}, 2013.

\bibitem{chang_icml}
R.~Samdani, K-W. Chang, and D.~Roth.
\newblock A discriminative latent variable model for online clustering.
\newblock In {\em ICML}, 2014.

\bibitem{hierarchical_dp}
Y.~W. Teh, M.~I. Jordan, M.~J. Beal, and D.~M. Blei.
\newblock Hierarchical {D}irichlet processes.
\newblock {\em Journal of the American Statistical Association (JASA)},
  101(476):1566--1581, 2006.

\bibitem{collapsed_vb}
Y.~W. Teh, D.~Newman, and M.~Welling.
\newblock A collapsed variational {B}ayesian inference algorithm for latent
  {D}irichlet allocation.
\newblock In {\em NIPS}, 2006.

\bibitem{tong_koller}
S.~Tong and D.~Koller.
\newblock Restricted {B}ayes optimal classifiers.
\newblock In {\em AAAI}, 2000.

\bibitem{spatial_lda}
X.~Wang and E.~Grimson.
\newblock Spatial latent {D}irichlet allocation.
\newblock In {\em NIPS}, 2007.

\bibitem{sva_dpsvm}
Y.~Wang and J.~Zhu.
\newblock Small-variance asymptotics for {D}irichlet process mixtures of
  {SVM}s.
\newblock In {\em Proc. Twenty-Eighth AAAI Conference on Artificial
  Intelligence}, 2014.

\bibitem{dp_space}
Y.~Wang and J.~Zhu.
\newblock {DP}-space: {B}ayesian nonparametric subspace clustering with
  small-variance asymptotics.
\newblock In {\em ICML}, 2015.

\bibitem{convex-dpmeans}
I.~E.~H. Yen, X.~Lin, K.~Zhang, P.~Ravikumar, and I.~S. Dhillon.
\newblock A convex exemplar-based approach to {MAD}-{B}ayes {D}irichlet process
  mixture models.
\newblock In {\em ICML}, 2015.

\end{thebibliography}

\begin{appendices}
\section{Full Derivation of the SVA Objective}
Recall the standard Latent Dirichlet Allocation (LDA) model:
\begin{itemize}
\item Choose $\theta_j \sim \mbox{Dir}(\alpha)$, where $j \in \{1, ..., M\}$.
\item Choose $\psi_i \sim \mbox{Dir}(\beta)$, where $i \in \{1, ..., K\}$.
\item For each word $t$ in document $j$:
\begin{itemize}
\item Choose a topic $z_{jt} \sim \mbox{Cat}(\theta_j)$.
\item Choose a word $w_{jt} \sim \mbox{Cat}(\psi_{z_{jt}})$.
\end{itemize}
\end{itemize}
Here $\alpha$ and $\beta$ are scalar-valued (i.e., we are using a symmetric Dirichlet distribution).  Denote $\mathbf{W}$ as the vector denoting all words in all documents, $\mathbf{Z}$ as the topic indicators of all words in all documents, $\bm{\theta}$ as the concatenation of all the $\theta_j$ variables, and $\bm{\psi}$ as the concatenation of all the $\psi_i$ variables.  Also let $N_j$ be the total number of word tokens in document $j$.  The $\theta_j$ vectors are each of length $K$, the number of topics.  The $\psi_i$ vectors are each of length $D$, the number of words in the dictionary.   We can write down the full joint likelihood of the model as $p(\mathbf{W},\mathbf{Z},\bm{\theta},\bm{\psi} | \alpha, \beta) =$
\begin{displaymath}
\prod_{i=1}^K p(\psi_i | \beta) \prod_{j=1}^M p(\theta_j | \alpha) \prod_{t=1}^{N_j} p(z_{jt} | \theta_j)p (w_{jt} | \psi_{z_{jt}}),
\end{displaymath}
where each of the probabilities are given as specified in the above model.  Now, following standard LDA manipulations, we can eliminate variables to simplify inference by integrating out $\bm{\theta}$ to obtain
\begin{displaymath}
p(\mathbf{Z},\mathbf{W}, \bm{\psi} | \alpha, \beta) = \int_{\bm{\theta}} p(\mathbf{W},\mathbf{Z},\bm{\theta},\bm{\psi} | \alpha, \beta) d \bm{\theta}.
\end{displaymath}
After simplification, we obtain $p(\mathbf{Z},\mathbf{W}, \bm{\psi} | \alpha, \beta) =$
\begin{displaymath}
\bigg [\prod_{i=1}^K p(\psi_i | \beta) \prod_{j=1}^M \prod_{t=1}^{N_j} p(w_{jt} | \psi_{z_{jt}}) \bigg ] \times 
\bigg [ \prod_{j=1}^M \frac{\Gamma(\alpha K)}{\Gamma(\sum_{i=1}^K n_{j \cdot}^i + \alpha K)} \prod_{i=1}^K \frac{\Gamma(n_{j \cdot}^i + \alpha)}{\Gamma(\alpha)} \bigg ].
\end{displaymath}
Here $n_{j \cdot}^i$ is the number of word tokens in document $j$ assigned to topic $i$. Now, following~\cite{mad_bayes}, we can obtain the SVA objective by taking the log of this likelihood and observing what happens when the variance goes to zero.  In order to do this, we must be able to scale the likelihood categorical distribution, which is not readily apparent.  Here we use two facts about the categorical distribution.  First, as discussed in~\cite{banerjee_bregman}, we can equivalently express the distribution $p(w_{jt} | \psi_{z_{jt}})$ in its Bregman divergence form, which will prove amenable to SVA analysis.  In particular, example 10 from~\cite{banerjee_bregman} details this derivation.  In our case we have a categorical distribution, and thus we can write the probability of token $w_{jt}$ as:
\begin{equation}
p(w_{jt} | \psi_{z_{jt}}) = \exp(-d_{\phi}(1,\psi_{z_{jt},w_{jt}})).
\label{eqn:bregkl}
\end{equation}
$d_{\phi}$ is the unique Bregman divergence associated with the categorical distribution which, as detailed in example 10 from~\cite{banerjee_bregman}, is the discrete KL divergence 
and $\psi_{z_{jt},w_{jt}}$ is the entry of the topic  vector associated with the topic indexed by $z_{jt}$ at the entry corresponding to the word at token $w_{jt}$.  
This KL divergence will correspond to a single term of the form $x \log(x / y)$, where $x = 1$ since we are considering a single token of a word in a document.  Thus, for a particular token, the KL divergence simply equals $-\log \psi_{z_{jt},w_{jt}}$.  Note that when plugging in $-\log \psi_{z_{jt},w_{jt}}$ into \eqref{eqn:bregkl}, we obtain exactly the original probability for word token $w_{jt}$ that we had in the original multinomial distribution.  We will write the KL-divergence $d_{\phi}(1,\psi_{z_{jt},w_{jt}})$ as $\mbox{KL}(\tilde{w}_{jt},\psi_{z_{jt}})$, where $\tilde{w}_{jt}$ is an indicator vector for the word at token $w_{jt}$.

Although it may appear that we have gained nothing by this notational manipulation, there is a key advantage of expressing the categorical probability in terms of Bregman divergences. In particular, the second step is to parameterize the Bregman divergence by an additional variance parameter.  As discussed in Lemma 3.1 of~\cite{jiang_nips}, we can introduce another parameter, which we will call $\eta$, that scales the variance in an exponential family while fixing the mean.  This new distribution may be represented, using the Bregman divergence view, as proportional to $\exp(- \eta \cdot \mbox{KL}(\tilde{w}_{jt},\psi_{z_{jt}}))$.  As $\eta \to \infty$, the mean remains fixed while the variance goes to zero, which is precisely what we require to perform small-variance analysis.

We will choose to scale $\alpha$ appropriately as well; this will ensure that the hierarchical form of the model is retained asymptotically.  In particular, we will write $\alpha = \exp(-\lambda \cdot \eta)$.  Now we consider the full negative log-likelihood:
\begin{displaymath}
- \log p(\mathbf{Z},\mathbf{W}, \bm{\psi} | \alpha, \beta).
\end{displaymath}
Let us first derive the asymptotic behavior arising from the Dirichlet-multinomial distribution part of the likelihood, for a given document $j$:
\begin{displaymath}
\frac{\Gamma(\alpha K)}{\Gamma(\sum_{i=1}^K n_{j \cdot}^i + \alpha K)} \prod_{i=1}^K \frac{\Gamma(n_{j \cdot}^i + \alpha)}{\Gamma(\alpha)}.
\end{displaymath}
In particular, we will show the following lemma.
\begin{lem}
Consider the likelihood
\begin{displaymath}
p(\bm{Z} | \alpha) = \bigg [ \prod_{j=1}^M \frac{\Gamma(\alpha K)}{\Gamma(\sum_{i=1}^K n_{j \cdot}^i + \alpha K)} \prod_{i=1}^K \frac{\Gamma(n_{j \cdot}^i + \alpha)}{\Gamma(\alpha)} \bigg ].
\end{displaymath}
If $\alpha = \exp(-\lambda \cdot \eta)$, then asymptotically as $\eta \to \infty$ we have
\begin{displaymath}
-\log p(\bm{Z} | \alpha) \sim \eta \lambda \sum\nolimits_{j=1}^M(K_{j+} - 1).
\end{displaymath}
\end{lem}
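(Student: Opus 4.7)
The plan is to expand the Gamma-function ratios appearing in the Dirichlet–multinomial factor using the Pochhammer identity and then track the leading power of $\alpha$ as $\alpha \to 0^+$ (which is the regime $\eta \to \infty$ under $\alpha = e^{-\lambda \eta}$). The key tool is the standard identity $\Gamma(\alpha + n) = \Gamma(\alpha) \prod_{k=0}^{n-1}(\alpha + k)$ for positive integers $n$, giving
$$\frac{\Gamma(n + \alpha)}{\Gamma(\alpha)} = \prod_{k=0}^{n-1}(\alpha + k).$$
For $n = 0$ this is identically $1$; for $n \geq 1$ it behaves like $\alpha \cdot (n-1)!$ as $\alpha \to 0^+$. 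The same identity, with $\alpha K$ in place of $\alpha$, gives $\Gamma(N_j + \alpha K)/\Gamma(\alpha K) = \prod_{k=0}^{N_j - 1}(\alpha K + k) \to \alpha K \cdot (N_j - 1)!$, where $N_j = \sum_i n_{j\cdot}^i$.

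First I would split the product $\prod_{i=1}^K \Gamma(n_{j\cdot}^i + \alpha)/\Gamma(\alpha)$ into topics that document $j$ uses (those with $n_{j\cdot}^i > 0$) and topics that it does not. The unused topics contribute a factor of $1$ apiece, while each of the $K_{j+}$ used topics contributes one factor of $\alpha$ to leading order, so
$$\prod_{i=1}^K \frac{\Gamma(n_{j\cdot}^i + \alpha)}{\Gamma(\alpha)} \sim \alpha^{K_{j+}} \prod_{i : n_{j\cdot}^i > 0}(n_{j\cdot}^i - 1)!$$
as $\alpha \to 0^+$. Combining this with the reciprocal factor $\Gamma(\alpha K)/\Gamma(N_j + \alpha K)$, which contributes a single factor of $1/\alpha$ at leading order, gives for each document
$$\frac{\Gamma(\alpha K)}{\Gamma(N_j + \alpha K)} \prod_{i=1}^K \frac{\Gamma(n_{j\cdot}^i + \alpha)}{\Gamma(\alpha)} \sim C_j \cdot \alpha^{K_{j+} - 1},$$
where $C_j$ is a positive constant depending on the token counts, $K$, and $N_j$, but not on $\alpha$.

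Next I would take negative logarithms and sum over documents to obtain
$$-\log p(\bm{Z} \mid \alpha) = -\Bigl(\sum_{j=1}^M (K_{j+} - 1)\Bigr) \log \alpha - \sum_{j=1}^M \log C_j + o(1)$$
in the limit $\alpha \to 0^+$. Substituting $\alpha = e^{-\lambda \eta}$ replaces $-\log \alpha$ by $\lambda \eta$, yielding
$$-\log p(\bm{Z} \mid \alpha) = \eta \lambda \sum_{j=1}^M (K_{j+} - 1) - \sum_{j=1}^M \log C_j + o(1).$$
When the coefficient $\sum_j (K_{j+} - 1)$ is strictly positive the first term grows linearly in $\eta$ while the remaining terms are bounded, so dividing by $\eta \lambda \sum_j (K_{j+} - 1)$ yields a ratio tending to $1$, which is precisely the claimed $\sim$ relation.

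The main place where care is needed is formalizing the $o(1)$ remainder: one must show that the error in the approximation $\prod_{k=0}^{n-1}(\alpha + k) = \alpha (n-1)! (1 + O(\alpha))$ aggregates to an $o(1)$ term after taking logs and summing over the finitely many documents and topics. This is a routine calculation since the product has only finitely many factors and each is continuous in $\alpha$ down to $0$. The only genuinely degenerate case is $\sum_j(K_{j+}-1) = 0$, i.e., every document uses exactly one topic, in which case the leading term vanishes and the $\sim$ statement must be interpreted (or the case excluded) in the standard way; everywhere else the argument above is straightforward.
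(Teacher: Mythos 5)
Your proof is correct and follows essentially the same route as the paper's: both expand the Gamma ratios via $\Gamma(\alpha+n)/\Gamma(\alpha)=\prod_{k=0}^{n-1}(\alpha+k)$, observe that only the $k=0$ factors diverge as $\alpha\to 0^+$, and count $K_{j+}$ such factors from the used topics against one from the denominator to get the exponent $K_{j+}-1$. Your explicit handling of the bounded remainder and the degenerate case $\sum_j(K_{j+}-1)=0$ is slightly more careful than the paper, which passes over both points.
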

\begin{proof}
Note that $N_j = \sum_{i=1}^K n^i_{j \cdot}$.  Using standard properties of the $\Gamma$ function, we have that the negative log of the above distribution is equal to 
\begin{displaymath}
\sum_{n=0}^{N_j - 1} \log (\alpha K + n) - \sum_{i=1}^K \sum_{n=0}^{n_{j \cdot}^i - 1} \log(\alpha + n).
\end{displaymath}
All of the logarithmic summands converge to a finite constant whenever they have an additional term besides $\alpha$ or $\alpha K$ inside.  The only terms that asymptotically diverge are those of the form $\log (\alpha K)$ or $\log (\alpha)$, that is, when $n = 0$.  The first term always occurs.  Terms of the type $\log(\alpha)$ occur only when, for the corresponding $i$, we have $n_{j \cdot}^i > 0$.  Recalling that $\alpha = \exp(-\lambda \cdot \eta)$, we can conclude that the negative log of the Dirichlet multinomial term becomes asymptotically $ \eta \lambda (K_{j+} - 1)$, where $K_{j+}$ is the number of topics $i$ in document $j$ where $n_{j \cdot}^i > 0$, i.e., the number of topics currently utilized by document $j$.  (The maximum value for $K_{j+}$ is $K$, the total number of topics.)
\end{proof}
The rest of the negative log-likelihood is straightforward.  The $-\log p(\psi_i | \beta)$ terms vanish asymptotically since we are not scaling $\beta$ (see the note below on scaling $\beta$).  Thus, the remaining terms in the SVA objective are the ones arising from the word likelihoods which, after applying a negative logarithm, become
\begin{displaymath}
-\sum_{j=1}^M \sum_{t=1}^{N_j} \log p(w_{jt} | \psi_{z_{jt}}).
\end{displaymath}
Using the Bregman divergence representation, we can conclude that the negative log-likelihood asymptotically yields the objective $-\log p(\mathbf{Z},\mathbf{W}, \bm{\psi} | \alpha, \beta) \sim$
\begin{displaymath}
\eta \bigg [ \sum_{j=1}^M \sum_{t=1}^{N_j} \mbox{KL}(\tilde{w}_{jt},\psi_{z_{jt}}) + \lambda \sum_{j=1}^M (K_{j+}-1) \bigg ],
\end{displaymath}
where $f(x) \sim g(x)$ denotes that $f(x) / g(x) \to 1$ as $x \to \infty$.  This leads to the objective function
\begin{equation}
\min_{\bm{Z},\bm{\psi}} \sum_{j=1}^M \sum_{t=1}^{N_j} \mbox{KL}(\tilde{w}_{jt},\psi_{z_{jt}}) + \lambda \sum_{j=1}^M K_{j+}.
\label{eqn:obj}
\end{equation}
We remind the reader that $\mbox{KL}(\tilde{w}_{jt},\psi_{z_{jt}}) = -\log \psi_{z_{jt},w_{jt}}$.  Thus we obtain a $k$-means-like term that says that all words in all documents should be ``close" to their assigned topic in terms of KL-divergence, but that we should also not have too many topics represented in each document.

Note that we did not scale $\beta$, to obtain a simple objective with only one parameter (other than the total number of topics), but let us say a few words about scaling $\beta$.  A natural approach is to further integrate out $\bm{\psi}$ of the joint likelihood, as is done with the collapsed Gibbs sampler.  One would obtain additional Dirichlet-multinomial distributions, and properly scaling as discussed above would yield a simple objective that places penalties on the number of topics per document as well as the number of words in each topic.  Optimization would be performed only with respect to the topic assignment matrix.  Future work would consider the effectiveness of such an objective function for topic modeling.  

\subsection{Further Details on the Basic Algorithm}
\begin{algorithm}[tb]
  \caption{Basic Batch Algorithm}
  \label{alg:basic}
\begin{algorithmic}
   \STATE {\bfseries Input:} Words: $\mathbf{W}$, Number of topics: $K$, Topic penalty: $\lambda$
   \STATE Initialize $\mathbf{Z}$ and topic vectors $\psi_1, ..., \psi_K$.
   \STATE Compute initial objective function~\eqref{eqn:obj} using $\mathbf{Z}$ and $\bm{\psi}$.
   \REPEAT
   \STATE \textit{//Update assignments:}
   \FOR{every word token $t$ in every document $j$}
   \STATE Compute distance $d(j,t,i)$ to topic $i$: $-\log(\psi_{i,w_{jt}})$.
   \STATE If $z_{jt} \neq i$ for all tokens $t$ in document $j$, add $\lambda$ to $d(j,t,i)$.
   \STATE Obtain assignments via $Z_{jt} = \mbox{argmin}_i d(j,t,i)$.
   \ENDFOR
   \STATE \textit{//Update topic vectors:}
   \FOR{every element $\psi_{iu}$}
   \STATE $\psi_{iu} = $ \# occ. of word $u$ in topic $i$ / total \# of word tokens in topic $i$.
   \ENDFOR
   \STATE Recompute objective function~\eqref{eqn:obj} using updated $\mathbf{Z}$ and $\bm{\psi}$.
   \UNTIL{no change in objective function.}
   \STATE {\bfseries Output:} Assignments $\mathbf{Z}$.
\end{algorithmic}
\end{algorithm}
Pseudo-code for the basic algorithm is given as Algorithm~\ref{alg:basic}.  We briefly elaborate on a few points raised in the main text.

First, the running time of the batch algorithm can be shown to be $O(NK)$ per iteration, where $N$ is the total number of word tokens and $K$ is the number of topics.  
This is because each word token must be compared to every topic, but the resulting comparison can be done in constant time.  Updating topics is performed by maintaining a count of the number of occurrences of each word in each topic, which also runs in $O(NK)$ time.  Note that the collapsed Gibbs sampler runs in $O(NK)$ time per iteration, and thus has a comparable running time per iteration.

Second, one can also show that this algorithm is guaranteed to converge to a local optimum, similar to $k$-means and DP-means.  
The argument follows along similar lines to $k$-means and DP-means, namely that each updating step cannot increase the objective function.  In particular, the update on the topic vectors must improve the objective function since the means are known to be the best representatives for topics based on the results of~\cite{banerjee_bregman}.  The assignment step must decrease the objective since we only re-assign if the distance goes down.  Further, we only re-assign to a topic that is not currently used by the document if the distance is more than $\lambda$ greater than the distance to the current topic, thus accounting for the additional $\lambda$ that must be paid in the objective function.

\section{An Efficient Facility Location Algorithm for Improved Word Assignments}
In this section, we describe an efficient $O(NK)$ algorithm based on facility location for obtaining the word assignments.  Recall the algorithm, given earlier in Algorithm~\ref{alg:ufl}.  Our first observation is that, for a fixed size of $T$ and a given $i$, the best choice of $T$ is obtained by selecting the $|T|$ closest tokens to $\psi_i$ in terms of the KL-divergence.  Thus, as a first pass, we can obtain the correct points to mark by appropriately sorting KL-divergences of all tokens to all topics, and then searching over all sizes of $T$ and topics $i$.  

Next we make three observations about the sorting procedure.  First, the KL-divergence between a word and a topic depends purely on counts of words within topics; recall that it is of the form $-\log \psi_{iu}$, where $\psi_{iu}$ equals the number of occurrences of word $u$ in topic $i$ divided by the total number of word tokens assigned to $i$.  Thus, for a given topic, the sorted words are obtained exactly by sorting word counts within a topic in decreasing order.  

Second, because the word counts are all integers, we can use a linear-time sorting algorithm such as counting sort or radix sort to efficiently sort the items.  In the case of counting sort, for instance, if we have $n$ integers whose maximum value is $k$, the total running time is $O(n+k)$; the storage time is also $O(n+k)$.  In our case, we perform many sorts.  Each sort considers, for a fixed document $j$, sorting word counts to some topic $i$.  Suppose there are $n^i_{j \cdot}$ tokens with non-zero counts to the topic, with maximum word count $m^i_j$.  Then the running time of this sort is $O(n^i_{j \cdot} + m^i_j)$.  
Across the document, we do this for every topic, making the running time scale as $O(\sum_i (n^i_{j \cdot} + m^i_j)) = O(n^{\cdot}_{j \cdot} K)$, where $n^{\cdot}_{j \cdot}$ is the number of word tokens in document $j$.  Across all documents this sorting then takes $O(NK)$ time.

Third, we note that we need only sort once per run of the algorithm.  Once we have sorted lists for words to topics, if we mark some set $T$, we can efficiently remove these words from the sorted lists and keep the updated lists in sorted order.  Removing an individual word from a single sorted list can be done in constant time by maintaining appropriate pointers.  Since each word token is removed exactly once during the algorithm, and must be removed from each topic, the total time to update the sorted lists during the algorithm is $O(NK)$.

At this point, we still do not have a procedure that runs in $O(NK)$ time.  In particular, we must find the minimum of
\begin{displaymath}
\frac{f_i + \sum_{t \in T} \mbox{KL}(\tilde{w}_{jt},\psi_i)}{|T|}
\end{displaymath}
at each round of marking.  Naively this is performed by traversing the sorted lists and accumulating the value of the above score via summation.  In the worst case, each round would take a total of $O(NK)$ time across all documents, so if there are $R$ rounds on average across all the documents, the total running time would be $O(NKR)$.  However, we can observe that we need not traverse entire sorted lists in general.  Consider a fixed document, where we try to find the best set $T$ by traversing all possible sizes of $T$.  We can show that, as we increase the size of $T$, the value of the score function monotonically decreases until hitting the minimum value, and then monotonically increases afterward.  
We can formalize the monotonicity of the scoring function as follows:
\begin{prop}
Let $s_{ni}$ be the value of the scoring function~\eqref{eqn:fl} for the best candidate set $T$ of size $t$ for topic $i$.  If $s_{t-1,i} \leq s_{ti}$, then $s_{ti} \leq s_{t+1,i}$.
\end{prop}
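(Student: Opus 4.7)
The plan is to reduce the scoring function to a one-dimensional form and then exploit a simple convex-combination identity. Since for a fixed topic $i$ the optimal $T$ of size $t$ consists of the $t$ tokens with smallest KL-divergence to $\psi_i$, I will first sort those divergences as $d_1 \le d_2 \le d_3 \le \cdots$ and rewrite $s_{ti} = (f_i + \sum_{k=1}^{t} d_k)/t$. Everything about the proposition will then be extracted from how this sequence interacts with the sorted $d_k$'s.

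The key identity I would establish is the telescoping relation $t \cdot s_{ti} = (t-1)\, s_{t-1,i} + d_t$, which rewrites as
\[
s_{ti} \;=\; \tfrac{t-1}{t}\, s_{t-1,i} \;+\; \tfrac{1}{t}\, d_t.
\]
Thus $s_{ti}$ is a convex combination of $s_{t-1,i}$ and $d_t$ and so lies between them. From this I immediately read off the central fact: $s_{t-1,i} \le s_{ti}$ forces $s_{t-1,i} \le s_{ti} \le d_t$, in particular $d_t \ge s_{ti}$.

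Next I would invoke the sorted order $d_{t+1} \ge d_t$ to conclude $d_{t+1} \ge s_{ti}$. Finally I apply the same convex-combination identity one index later, namely $s_{t+1,i} = \tfrac{t}{t+1} s_{ti} + \tfrac{1}{t+1} d_{t+1}$; since $d_{t+1} \ge s_{ti}$, the weighted average cannot fall below $s_{ti}$, yielding $s_{t+1,i} \ge s_{ti}$, which is what the proposition asserts.

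There is no real obstacle beyond noticing the convex-combination structure; once that is written down the rest is a three-line chain. The only subtle point to verify carefully is that ``best candidate set $T$ of size $t$'' really is obtained by taking the $t$ smallest $d_k$'s (so that the sorted sums $\sum_{k=1}^{t} d_k$ actually realize $s_{ti}$); this follows from the fact that the numerator in \eqref{eqn:fl} is linear in the chosen divergences while the denominator $|T|$ is fixed once $t$ is fixed, so minimizing reduces to picking the $t$ smallest divergences. After that caveat, the monotonicity argument above goes through verbatim.
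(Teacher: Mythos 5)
Your proof is correct, and it is in substance the same argument as the paper's, though packaged more cleanly and more generally. The paper works directly with the logarithmic form of the KL-divergence (writing each distance as $\log n^i_{\cdot\cdot} - \log c_{i\ell}$ for sorted word counts $c_{i\ell}$), expands the hypothesis $s_{t-1,i}\le s_{ti}$, and after clearing denominators arrives at the inequality $f_i + t\log c_{it} \le \sum_{\ell=1}^{t}\log c_{i\ell}$, which it then propagates to index $t+1$ using sortedness; unwinding the notation, that intermediate inequality is exactly your statement $s_{ti}\le d_t$. Your convex-combination identity $s_{ti} = \frac{t-1}{t}s_{t-1,i} + \frac{1}{t}d_t$ extracts the same fact without ever using the specific form of the divergence, so your argument applies verbatim to any nonnegative distances, which is a mild but genuine gain in generality and makes the ``once it stops decreasing it never decreases again'' mechanism transparent. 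You also correctly flag and justify the one preliminary fact both proofs need, namely that the best $T$ of size $t$ consists of the $t$ closest tokens. No gaps.
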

\begin{proof}
Recall that the KL-divergence is equal to the negative logarithm of the number of occurrences of the corresponding word token divided by the total number occurrences of tokens in the topic.  Write this as $\log n^i_{\cdot \cdot}- \log c_{i \ell}$, where $n^i_{\cdot \cdot}$ is the number of occurrences of tokens in topic $i$ and $c_{i \ell}$ is the count of the $\ell$-th highest-count word in topic $i$.  Now, by assumption $s_{t-1,i} \leq s_{ti}$.  Plugging the score functions into this inequality and cancelling the $\log n^i_{\cdot \cdot}$ terms, we have
\begin{displaymath}
-\frac{1}{t-1} \sum_{\ell=1}^{t-1} \log c_{i\ell} + \frac{f_i}{t-1} \leq -\frac{1}{t} \sum_{\ell=1}^t \log c_{i\ell} + \frac{f_i}{t}.
\end{displaymath}
Multiplying by $t(t-1)$ and simplifying yields the inequality
\begin{displaymath}
f_i + t \log c_{it} \leq \sum_{\ell=1}^{t} \log c_{i\ell}.
\end{displaymath}
Now, assuming this holds for $s_{t-1,i}$ and $s_{t,i}$, we must show that this inequality also holds for $s_{t,i}$ and $s_{t+1,i}$, i.e. that
\begin{displaymath}
f_i + (t+1) \log c_{i,t+1} \leq  \sum_{\ell=1}^{t+1} \log c_{i\ell}.
\end{displaymath}
Simple algebraic manipulation and the fact that the counts are sorted, i.e., $\log c_{i,t+1} \leq \log c_{it}$, shows the inequality to hold.
\end{proof}
In words, the above proof demonstrates that, once the scoring function stops decreasing, it will not decrease any further, i.e., the minimum score has been found.  Thus, once the score function starts to increase as $T$ gets larger, we can stop and the best score (i.e., the best set $T$) for that topic $i$ has been found.  We do this for all topics $i$ until we find the overall best set $T$ for marking.  Under the mild assumption that the size of the chosen minimizer $T$ is similar (specifically, within a constant factor) to the average size of the best candidate sets $T$ across the other topics (an assumption which holds in practice), then it follows that the total time to find all the sets $T$ takes $O(NK)$ time.

Putting everything together, all the steps of this algorithm combine to cost $O(NK)$ time.


\section{Additional Experimental Results}
\textbf{Objective optimization}. Table \ref{tab:obj} shows the optimized objective function values for all three proposed algorithms. We can see that the \texttt{Word} algorithm significantly reduces the objective value when compared with the \texttt{Basic} algorithm, and the \texttt{Word+Refine} algorithm reduces further. As pointed out in \cite{convex-dpmeans} in the context of other SVA models, the \texttt{Basic} algorithm is very sensitive to initializations. However, this is not the case for the \texttt{Word} and \texttt{Word+Refine} algorithms and they are quite robust to initializations. From the objective values, the improvement from \texttt{Word+Refine} to \texttt{Word} seems to be marginal. But we will show in the following that the incorporation of the topic refinement is crucial for learning good topic models.

\textbf{Evolution of the Gibbs Sampler.} The Gibbs sampler can easily become trapped in a local optima area and needs many iterations on large data sets, which can be seen from Figure \ref{fig:l1-trend}. Since our algorithm outputs $\mathbf{Z}$, we can use this assignment as initialization to the sampler. In Figure \ref{fig:l1-trend}, we also show the evolution of topic reconstruction $\ell_1$ error initialized with the \texttt{Word+Refine} optimized assignment for 3 iterations with varying values of $\lambda$. With these semi-optimized initializations, we observe more than 5-fold speed-up compared to random initializations. 

\begin{table}
\centering
\begin{tabular}{| l | c | c |}
\hline
objective value ($\times10^6$) & SynthA & SynthB \\ \hline
\texttt{Basic} & 5.07 & 5.45 \\ \hline
\texttt{Word} & 4.06 & 3.79 \\ \hline
\texttt{Word+Refine} & 3.98 & 3.61 \\ \hline
\end{tabular}
\caption{Optimized combinatorial topic modeling objective function values for different algorithms with $\lambda = 10$.}
\label{tab:obj}
\end{table}

\begin{figure}
\centering
\includegraphics[width=.8\textwidth]{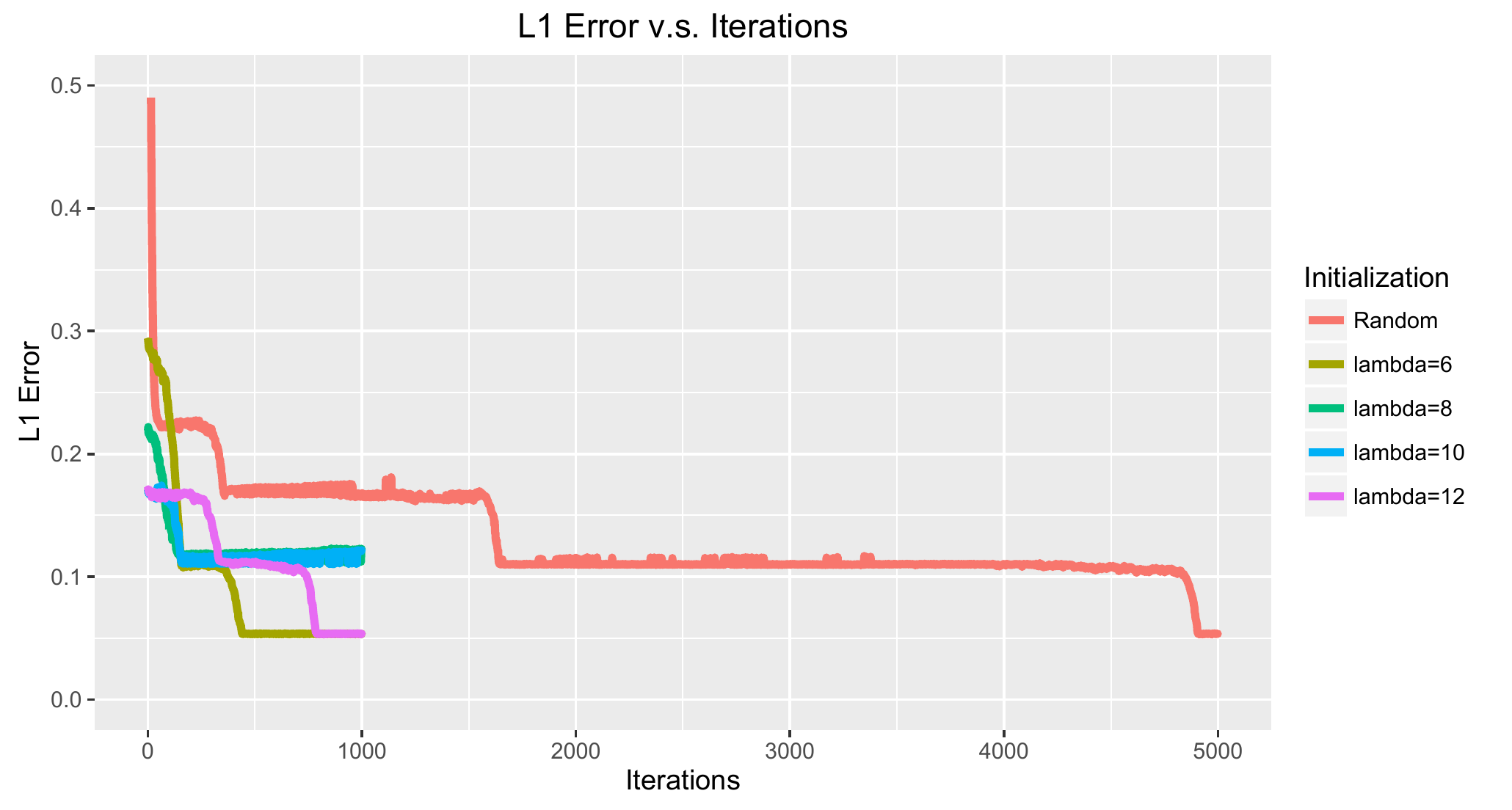}
\caption{The evolution of topic reconstruction $\ell_1$ errors of Gibbs sampler with different initializations: ``Random'' means random initialization, and  ``lambda=6'' means initializing with the assignment earned using \texttt{Word+Refine} algorithm with $\lambda=6$ (best viewed in color).}
\label{fig:l1-trend}
\end{figure}

\begin{figure}
\centering
\includegraphics[width=.8\textwidth]{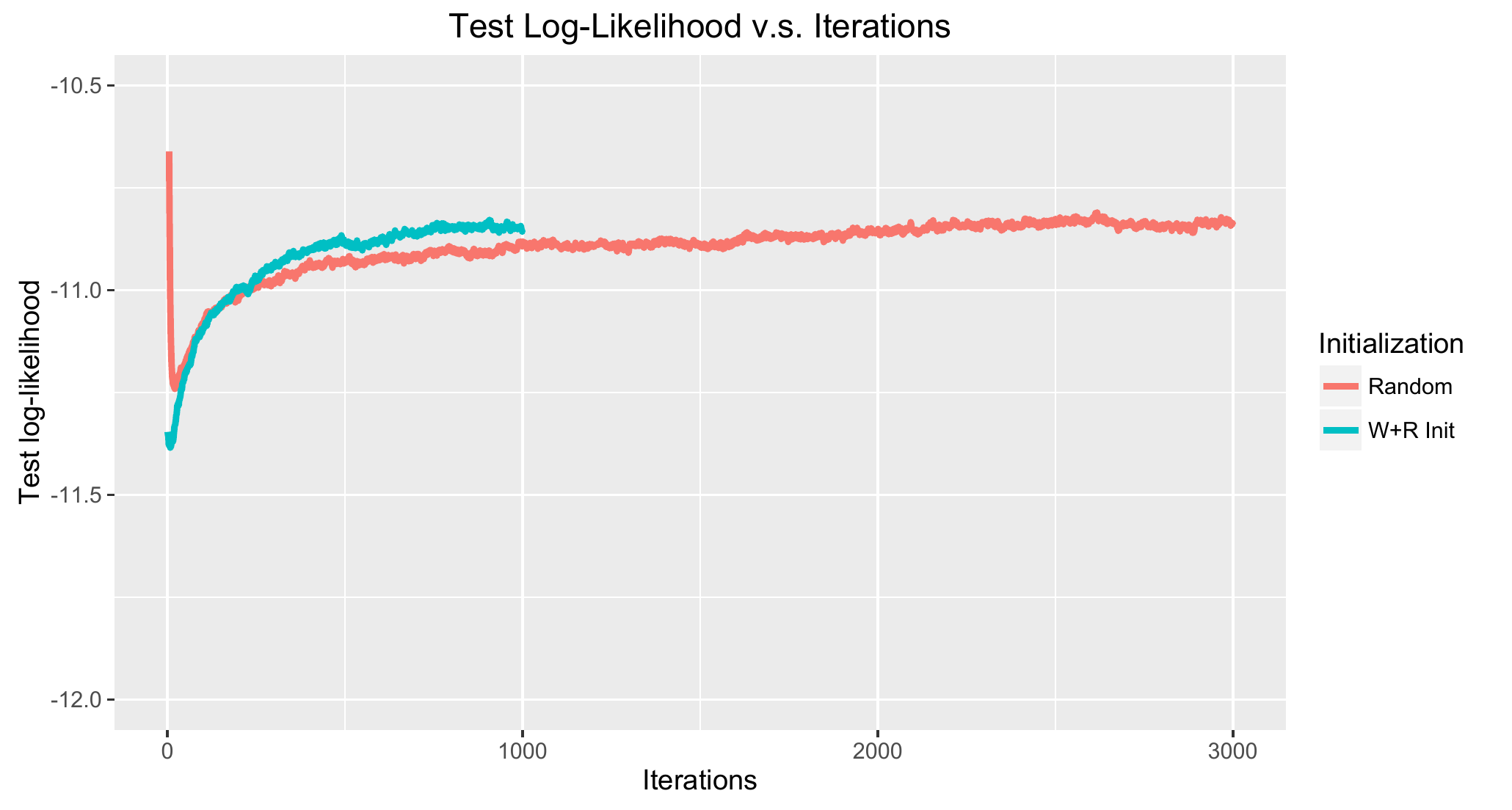}
\caption{The evolution of the predictive word log-likelihood of the Enron dataset with different initializations: ``Random'' means random initialization, and ``W+R Init'' means initializing with the assignment learned using \texttt{Word+Refine} algorithm.}
\label{fig:heldout-trend}
\end{figure}

\begin{table}
\centering
\begin{tabular}{| l || c | c | c || c | c | c || c | c | c |}
\hline
\multirow{2}{*}{Enron} & \multicolumn{3}{c||}{$\beta=0.1$}  & \multicolumn{3}{c||}{$\beta=0.01$} & \multicolumn{3}{c|}{$\beta=0.001$} \\ \cline{2-10}
 & hard & original & KL & hard & original & KL & hard & original & KL \\ \hline
 \texttt{CGS} & -5.932 & -8.583 & 3.899 & -5.484 & -10.781 & 7.084 & -5.091 & -13.296 & 10.000 \\ \hline
 \texttt{VB} & -6.007 & -8.803 & 4.528 & -5.610 & -10.202 & 7.010 & -5.334 & -11.472 & 9.010 \\ \hline
 \texttt{W+R} & -5.434 & -9.843 & 4.541 & -5.147 & -11.673 & 7.225 & -4.918 & -13.737 & 9.769 \\ \hline
 \multirow{2}{*}{NYT} & \multicolumn{3}{c||}{$\beta=0.1$}  & \multicolumn{3}{c||}{$\beta=0.01$} & \multicolumn{3}{c|}{$\beta=0.001$} \\ \cline{2-10}
 & hard & original & KL & hard & original & KL & hard & original & KL \\ \hline
 \texttt{CGS} & -6.594 & -9.361 & 4.374 & -6.205 & -11.381 & 7.379 & -5.891 & -13.716 & 10.135 \\ \hline
 \texttt{VB} & -6.470 & -10.077 & 5.666 & -6.269 & -11.509 & 7.803 & -6.023 & -12.832 & 9.691 \\ \hline
 \texttt{W+R} & -6.105 & -10.612 & 5.059 & -5.941 & -12.225 & 7.315 & -5.633 & -14.524 & 9.939 \\ \hline
\end{tabular}
\caption{The predictive word log-likelihood on new documents for Enron ($K=100$ topics) and NYTimes ($K=100$ topics) datasets with fixed $\alpha$ value. ``hard'' is short for hard predictive word log-likelihood which is computed using the word-topic assignments inferred by the \texttt{Word} algorithm, ``original'' is short for original predictive word log-likelihood which is computed using the document-topic distributions inferred by the sampler, and ``KL'' is short for symmetric KL-divergence.}
\label{tab:enron}
\end{table}

\textbf{Topic Reconstruction Error.} In the main text, we observed that the \texttt{Anchor} method is the most competitive with \texttt{Word+Refine} on larger synthetic data sets, but that \texttt{Word+Refine} still outperforms \texttt{Anchor} for these larger data sets.  We found this to be true as we scale up further; for instance, for 20,000 documents from the SynthB data, \texttt{Anchor} achieves a topic reconstruction score of 0.103 while \texttt{Word+Refine} achieves 0.095.

\textbf{Log likelihood comparisons on real data.}
Table~\ref{tab:enron} contains further predictive log-likelihood results on the Enron and NYTimes data sets.  Here we also show results on VB, which also indicate (as expected) that our approach does well with respect to the hard log-likelihood score but less well on the original log-likelihood score. We omit the results of the Anchor method since it cannot adjust its result on different combinations of $\alpha$ and $\beta$ values \footnote{We also observed that there are 0 entries in the learned topic matrix, which makes it difficult to compute the predictive log-likelihood.}. In Figure \ref{fig:heldout-trend}, we show the evolution of predictive heldout log-likelihood of the Gibbs sampler initialized with the \texttt{Word+Refine} optimized assignment for 3 iterations for the Enron dataset. With these semi-optimized initializations, we also observed significant speed-up compared to random initializations.

\end{appendices}

\end{document}